\newtheorem{theorem}{Theorem}[section]
\newtheorem{proposition}{Proposition}[section]
\newtheorem{lemma}{Lemma}[section]
\newtheorem{assumption}{Assumption}%[section]
\newlength{\commentindent}
\renewcommand{\algorithmiccomment}[1]{\unskip\hfill\makebox[\commentindent][l]{//~#1}\par}
\LetLtxMacro{\oldalgorithmic}{\algorithmic}
\renewcommand{\algorithmic}[1][0]{%
  \oldalgorithmic[#1]%
  \renewcommand{\ALC@com}[1]{%
    \ifnum\pdfstrcmp{##1}{default}=0\else\algorithmiccomment{##1}\fi}%
}
\newcounter{procedure}
\title{Dual Averaging is Surprisingly Effective for Deep Learning Optimization}
\author[a b]{Samy Jelassi \footnote{Work done while interning at Facebook AI Research NYC. }}
\author[c]{Aaron Defazio}
\affil[a]{Princeton University, Princeton}
\affil[b]{Courant Institute of Mathematical Sciences, New York University, New York}
\affil[c]{Facebook AI Research, New York} 
\date{\today}
\begin{document}

\maketitle

\begin{abstract}
  % TeXShop support
% !TEX root = main.tex

First-order stochastic optimization methods are currently the most widely used class of methods for training deep neural networks. However, the choice of the optimizer has become an ad-hoc rule that can significantly affect the performance. For instance, SGD with momentum (SGD+M) is typically used in computer vision (CV) and Adam is used for training transformer models for Natural Language Processing (NLP). Using the wrong method can lead to significant performance degradation. Inspired by the dual averaging algorithm, we propose Modernized Dual Averaging (MDA), an optimizer that is able to perform as well as SGD+M in CV and as Adam in NLP. Our method is not adaptive and is significantly simpler than Adam. We show that MDA induces a decaying uncentered $L_2$-regularization compared to vanilla SGD+M and hypothesize that this may explain why it works on NLP problems where SGD+M fails. 
\end{abstract}

% TeXShop support
% !TEX root = main.tex

\section{Introduction}
Stochastic first-order optimization methods have been extensively employed for training neural networks. It has been empirically observed that the choice of the optimization algorithm is crucial for obtaining a good accuracy score. For instance, stochastic variance-reduced methods perform poorly in computer vision (CV) \citep{defazio2019ineffectiveness}. On the other hand, SGD with momentum (SGD+M) \citep{bottou1991stochastic,lecun1998gradient,bottou2008tradeoffs} works particularly well on CV tasks and Adam \citep{kingma2014adam} out-performs other methods on natural language processing (NLP) tasks \citep{choi2019empirical}.  In general, the choice of optimizer, as well as its hyper-parameters, must be included among the set of hyper-parameters that are searched over when tuning.

%Another challenging ad-hoc
%rule is the choice of hyperparameters and hyperparameter
%tuning methods, even after an optimizer is chosen. For SGD+M, hyperparameters include learning rate, decay schedule,
%choice of momentum parameters, number of warmup
%iterations, etc. More sophisticated methods as Adam even involve extra hyperparameters as the moving average coefficients. This may be critical for methods as Adam that contain several using gradient updates scaled by square roots of exponential
%moving averages of squared past gradients

%While the reasons of this discrepancy are still unclear, some work have attempted to elucidate it. \cite{zhang2019adam} advance that Adam's gradient clipping partially explains its success in NLP and \cite{balles2020geometry} suggest that the success of sign-gradient methods (which are equivalent to Adam under the right choice of parameters) compared to gradient descent is due to some properties of the objective's Hessian.

In this work we propose \textit{Modernized Dual Averaging} (MDA), an optimizer that matches the performance of SGD+M on CV tasks and Adam on NLP tasks, providing the best result in both worlds. Dual averaging \citep{nesterov2009primal} and its variants have been heavily explored in the convex optimization setting. Our \textit{modernized} version updates dual averaging with a number of changes that make it effective for non-convex problems.
Compared to other methods, dual averaging has the advantage of accumulating new gradients with \textit{non-vanishing} weights. Moreover, it has been very successful for regularized learning problems due to its ability to obtain desirable properties (e.g. a sparse solution in Lasso) faster than SGD \citep{xiao2010dual}. 

In this paper, we point out another advantage of dual averaging compared to SGD.  As we show in \autoref{sec:da}, under the right parametrization, dual averaging is equivalent to SGD applied to the same objective function but with a \textit{decaying $\ell_2$-regularization}.
This induced $\ell_2$-regularization has two primary implications for neural network training. Firstly, from an optimization viewpoint, regularization smooths the optimization landscape, aiding optimization.
From a learning viewpoint, $\ell_2$-regularization (often referenced as weight decay) is crucial for generalization performance \citep{krogh1992simple,bos1996using,wei2019regularization}.  Through an empirical investigation, we demonstrate that this implicit regularization effect is beneficial as MDA outperforms SGD+M in settings where the latter perform poorly.

\subsection*{Contributions}
This paper introduces MDA, an algorithm that matches the performance of the best first-order methods in a wide range of settings. More precisely, our contributions can be divided as follows:
\begin{itemize}
\item[--] \textit{Adapting dual averaging to neural network training}: We build on the subgradient method with double averaging \citep{nesterov2015quasi} and adapt it to deep learning optimization. In particular, we specialize the method to the $L_2$-metric, modify the hyper-parameters and design a proper scheduling of the parameters.
\item[--] \textit{Theoretical analysis in the non-convex setting}: Leveraging a connection between SGD and dual averaging, we derive a convergence analysis for MDA in the non-convex and smooth optimization setting. This analysis is the first convergence proof for a dual averaging algorithm in the non-convex case.  
\item[--] \textit{MDA matches the performance of the best first-order methods}: We investigate the effectiveness of dual averaging in CV and NLP tasks. For supervised classification, we match the test accuracy of SGD+M on  CIFAR-10 and ImageNet. For image-to-image tasks, we match the performance of Adam on MRI reconstruction on the fastMRI challenge problem. For NLP tasks, we match the performance of Adam on machine translation on IWSLT'14 De-En, language modeling on Wikitext-103 and masked language modeling on the concatenation of BookCorpus and English Wikipedia.
\end{itemize}

%In particular, we specialize the method to the $L_2$ metric and we modify the hyper-parameters and their schedule. 

%The paper decomposes as follows. \autoref{sec:da} presents the dual averaging along with its connection to SMD. \autoref{sec:convergence}, we derive a theoretical analysis of dual averaging in the smooth setting. In \autoref{sec:num_exp} exhibits the numerical performance of dual averaging in computer vision and NLP tasks.  

% TeXShop support
% !TEX root = main.tex

\subsection*{Related Work}

\paragraph{First-order methods in deep learning.} While SGD  and Adam are the most popular methods, a wide variety of optimization algorithms have been applied to the training of neural networks. Variants of SGD such as momentum methods and Nesterov's accelerated gradient improve the training performance \citep{sutskever2013importance}. Adaptive methods as Adagrad \citep{duchi2011adaptive}, RMSprop \citep{hinton2012neural} and Adam have been shown to find solutions that generalize worse than those found by non-adaptive methods on several state-of-the-art deep learning models \citep{wilson2017marginal}. \cite{berrada2018deep} adapted the Frank-Wolfe algorithm to design an optimization method that offers good generalization performance while requiring minimal hyper-parameter tuning compared to SGD. 

\paragraph{Dual averaging.} Dual averaging is one of the most popular algorithms in convex optimization and presents two main advantages. In regularized learning problems, it is known to more efficiently obtain the desired regularization effects compared to other methods as SGD \citep{xiao2010dual}. Moreover, dual averaging fits the distributed optimization setting \citep{duchi2011dual,tsianos2012push,hosseini2013online,shahrampour2013exponentially,colin2016gossip}. Finally, this method seems to be effective in manifold identification \citep{lee2012manifold,duchi2016asymptotic}. Our approach differs from these works as we study dual averaging in the non-convex optimization setting.

\paragraph{Convergence guarantees in non-convex optimization.} While obtaining a convergence rate for SGD when the objective is smooth is standard (see e.g. \cite{bottou2016optimization}), it is more difficult to analyze other algorithms in this setting.  Recently, \cite{zou2018weighted,ward2019adagrad,li2019convergence} provided rates for the convergence of variants of Adagrad towards a stationary point. \cite{defossez2020convergence} builds on the techniques introduced in \cite{ward2019adagrad} to derive a convergence rate for Adam. In the non-smooth weakly convex setting, \cite{davis2019stochastic} provides a convergence analysis for SGD and \cite{zhang2018convergence} for Stochastic Mirror Descent. Our analysis for dual averaging  builds upon the recent analysis of SGD+M by \citet{defazio2020mom}.

\paragraph{Decaying regularization} Methods that reduce the amount of regularization used over time have been explored in the convex case. \cite{blackboxoptimal2016} show that it's possible to use methods designed for strongly convex optimization to obtain optimal rates for other classes of convex functions by using polynomially decaying regularization with a restarting scheme.  In \cite{smallgradients2018}, it is shown that adding regularization centered around a sequence of points encountered during optimization, rather than the zero vector, results in better convergence in terms of gradient norm on convex problems. 

% TeXShop support
% !TEX root = main.tex

\section{Modernizing dual averaging}\label{sec:da}

%In this section, we provide a formal statement of the
%smooth minimization problem. We then remind the dual averaging algorithm \citep{nesterov2009primal} and point out its connection to stochastic mirror descent (SMD). We finally present our adaptation of this method to deep learning. 
%\paragraph{Stochastic minimization setting.} 

As we are primarily interested in neural network training, we focus in this work on the unconstrained stochastic minimization problem 
\begin{align}\label{eq:prob}
    \min_{x\in \mathbb{R}^n}\mathbb{E}_{\xi \sim P}[f(x,\xi)]:=f(x).
\end{align}

We assume that $\xi$ can be sampled from a fixed but unknown probability distribution $P$. Typically, $f(x,\xi)$ evaluates the loss of the decision rule parameterized by $x$ on a data point $\xi$. Finally, $f\colon \mathbb{R}^n\rightarrow \mathbb{R}$ is a (potentially) non-convex function.
 
 \paragraph{Dual averaging.} To solve \eqref{eq:prob}, we are interested in the dual averaging algorithm \citep{nesterov2009primal}. In general, this scheme is based on a mirror map $\Phi\colon \mathbb{R}^n\rightarrow \mathbb{R}$ assumed to be strongly convex. An exhaustive list of popular mirror maps is present in \cite{bregman1967relaxation,teboulle1992entropic,eckstein1993nonlinear,bauschke1997legendre}. In this paper, we focus on the particular choice 
 \begin{align}\label{eq:mirror_mapchoice}
     \Phi(x):=\frac{1}{2}\|x-x_0\|^2,
 \end{align}
 where $x_0\in \mathbb{R}^n$ is the initial point of the algorithm. 
 %The scheme is based on a mirror map $\Phi\colon \mathbb{R}^n\rightarrow \mathbb{R}$ assumed to be $1$-strongly convex with respect to $\|\cdot\|$
 %\begin{align}\label{eq:str_cvx}
 %    \Phi(y)\geq \Phi(x) + \langle \nabla \Phi(x),y-x\rangle+\frac{1}{2}\|x-y\|^2. 
 %\end{align}

%An exhaustive list of some popular mirror maps is listed in \cite{bregman1967relaxation,teboulle1992entropic,eckstein1993nonlinear,bauschke1997legendre}.
 %https://stanford.edu/~jduchi/projects/DuchiAgWa12.pdf

 \begin{algorithm}
\begin{algorithmic} 
\STATE \textbf{Input}: initial point $x_0$, scaling parameter sequence $\{\beta_k\}_{k=1}^T$ , step-size sequence $\{\lambda_k\}_{k=1}^T$ , stopping time $T$. 
\FOR{$k=0\dots T$}
\STATE Sample $\xi_k \sim P$ and compute stochastic gradient $g_k=\nabla f(x_k,\xi_k)$
\STATE $s_{k}=s_{k-1}+\lambda_k g_k.$ \COMMENT{Update the sum of gradients}
\STATE $x_{k+1} = \mathrm{argmin}_{x\in \mathbb{R}^n} \left\{\langle s_{k},x\rangle +\frac{\beta_k}{2}\|x-x_0\|_2^2\right\}$ \COMMENT{Update the iterate} 
\ENDFOR
\RETURN $\hat{x}_{T} = \frac{1}{T+1} \sum_{k=0}^T  x_k$.
\end{algorithmic}
\caption{\label{alg:DA}(Stochastic) dual averaging}
\end{algorithm}

Dual averaging generates a sequence of iterates $\{x_k,s_k\}_{k=0}^T$ as detailed in  \autoref{alg:DA}. At time step $k$ of the algorithm, the algorithm receives $g_k$ and updates the sum of the weighted gradients $s_k$. Lastly, it updates the next iterate $x_{k+1}$ according to a proximal step. Intuitively, $x_{k+1}$ is chosen to minimize an averaged first-order approximation to the function $f$, while the regularization term $\beta_k\Phi(x)$ prevents the sequence $\{x_k\}_{k=0}^T$ from oscillating too wildly. The sequence $\{\beta_k\}_{k=1}^T$ is chosen to be non-decreasing in order to counter-balance the growing influence of $\langle s_k,x\rangle$. We remark that the update in Algorithm \ref{alg:DA} can be rewritten as: 
\begin{align}\label{eq:DA_update}
    x_{k+1}= -s_k/\beta_k.%\frac{1}{\beta_k}\sum_{i=0}^k \lambda_i g_i.
\end{align}

%The update in Algorithm \ref{alg:DA} can be rewritten as: 
%\begin{align}\label{eq:DA_update}
%    \nabla \Phi(x_{k+1})= -\frac{1}{\beta_k}\sum_{i=0}^k \lambda_i g_i.
%\end{align}
%In this paper, we train our architectures using the two following versions of dual averaging.
% \begin{itemize}
%    \item[--]
%Euclidean dual averaging: the mirror map is $\Phi(x)=\frac{1}{2}\|x-x_0\|_2^2$ and strongly convex with respect to $\|\cdot\|_2$. In that case, we recover a weighted version of SGD:
%     \begin{align}\label{eq:eda}
%         x_{k+1}=x_0-\frac{1}{\beta_k}\sum_{i=0}^k \lambda_i g_i.
%     \end{align}
%     \item[--] Non-euclidean dual averaging: the mirror map is $\Phi(x)=\frac{1}{2}\|x-x_0\|_{\infty}^2$ and strongly convex with respect to $\|\cdot\|_{\infty}$. In that case, we recover a weighted version of signSGD, an algorithm that yielded interesting results in deep learning \cite{bernstein2018signsgd}.
%     \begin{align}
%         x_{k+1}=x_0-\frac{1}{\beta_k}\mathrm{sgn}\left(\sum_{i=0}^k \lambda_i g_i\right)\left\|\sum_{i=0}^k \lambda_i g_i\right\|_1.
%     \end{align}
% \end{itemize}

In the convex setting, \cite{nesterov2009primal} chooses $\beta_{k+1}=\beta_k + 1/\beta_k$ and $\lambda_k=1$ and shows convergence of the average iterate $\hat{x}_T$ to the optimal solution at a rate of $O(1/\sqrt{T})$. That sequence of $\beta$ values grows proportionally to the square-root of $k$, resulting in a method which an effective step size that decays at a rate $O(1/\sqrt{T})$. This rate is typical of decaying step size sequences used in first order stochastic optimization methods when no strong-convexity is present.

\paragraph{Connection with SGD.} With our choice of mirror map \eqref{eq:mirror_mapchoice}, stochastic mirror descent (SMD) is equivalent to SGD whose update is
\begin{align}\label{eq:update_MD}
    x_{k+1}&=  x_{k}-\eta_k g_k.
\end{align}
Dual averaging and SMD share similarities. While in constrained optimization the two algorithms are different \cite{juditsky2019unifying}, they yield the same update in the unconstrained case when $\lambda_k=\eta_k$ and $\beta_k=1$. In this paper, we propose another viewpoint on the relationship between the two algorithms.
 \begin{proposition}\label{prop:mdda}
  Let $f\colon \mathbb{R}^n\rightarrow \mathbb{R}$ be a function and let $T>0$. Let $\{h^{(k)}\}_{k=0}^{T}$ be a sequence of functions such that $h^{(k)}\colon \mathbb{R}^n\rightarrow \mathbb{R}$ and 
  \begin{align}
      h^{(k)}(x)&=f(x)+\frac{\alpha_k}{2}\|x-x_0\|^2,
  \end{align}
 where $\{\alpha_k\}_{k=0}^{T}$ is a sequence in $\mathbb{R}.$ Then, for $k\in \{1,\dots,T\}$, the update of dual averaging at iteration $k$ for the minimization problem on $f$ is equivalent to the one of SGD for the minimization problem on $h^{(k)}$ when 
 \begin{align}\label{eq:right_params_choice}
     \eta_k=\frac{\lambda_k}{\beta_k} \quad \text{and} \quad \alpha_k = \frac{\beta_{k}-\beta_{k-1}}{\lambda_k}.
 \end{align}
 \end{proposition}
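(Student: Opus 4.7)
The plan is to prove the two updates coincide by a direct algebraic manipulation: reduce the dual averaging step $x_{k+1} = \arg\min\{\langle s_k,x\rangle + \tfrac{\beta_k}{2}\|x-x_0\|^2\}$ to a one-step recursion involving only $x_k$ and $g_k$, and then compare it to the SGD step applied to $h^{(k)}$ at $x_k$.

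First I would write out the closed-form minimizer of the dual averaging subproblem, which gives $x_{k+1} = x_0 - s_k/\beta_k$, and therefore also $x_k = x_0 - s_{k-1}/\beta_{k-1}$. Solving the latter for $s_{k-1} = \beta_{k-1}(x_0 - x_k)$ and substituting into $s_k = s_{k-1} + \lambda_k g_k$ yields the purely recursive form
\begin{equation*}
  x_{k+1} \;=\; \frac{\beta_{k-1}}{\beta_k}\, x_k \;+\; \left(1-\frac{\beta_{k-1}}{\beta_k}\right) x_0 \;-\; \frac{\lambda_k}{\beta_k}\, g_k.
\end{equation*}
This is the dual averaging update expressed locally in $x_k$.

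Next I would compute the SGD step on $h^{(k)}$. Since $\nabla h^{(k)}(x,\xi) = \nabla f(x,\xi) + \alpha_k(x-x_0)$, evaluating at $x_k$ and using the stochastic gradient $g_k = \nabla f(x_k,\xi_k)$ gives
\begin{equation*}
  x_{k+1} \;=\; x_k - \eta_k\bigl(g_k + \alpha_k(x_k-x_0)\bigr) \;=\; (1-\eta_k\alpha_k)\, x_k \;+\; \eta_k \alpha_k\, x_0 \;-\; \eta_k\, g_k.
\end{equation*}
Plugging in $\eta_k=\lambda_k/\beta_k$ and $\alpha_k=(\beta_k-\beta_{k-1})/\lambda_k$ gives $\eta_k\alpha_k = (\beta_k-\beta_{k-1})/\beta_k = 1 - \beta_{k-1}/\beta_k$, and $\eta_k = \lambda_k/\beta_k$. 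The two recursions are then term-by-term identical.

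There is no real obstacle beyond bookkeeping — the proof is a one-paragraph calculation. The only mild subtlety is remembering that the dual averaging update is shifted by $x_0$ (the center of the mirror map), so when rewriting $s_{k-1}$ in terms of $x_k$ one has to carry the $x_0$ term through; this is exactly what produces the $\eta_k\alpha_k x_0$ contribution on the SGD side, so the identification is tight and requires both conditions in~\eqref{eq:right_params_choice}.
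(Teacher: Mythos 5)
Your proof is correct and follows essentially the same route as the paper's: write both updates as one-step recursions in $x_k$ and match coefficients, which forces $\eta_k=\lambda_k/\beta_k$ and $\alpha_k=(\beta_k-\beta_{k-1})/\lambda_k$. If anything you are more careful than the paper, whose displayed computation writes the dual-averaging minimizer as $x_{k+1}=-s_k/\beta_k$ and the regularizer's gradient contribution as $\alpha_k x_k$ (i.e., implicitly takes $x_0=0$), whereas you carry the $x_0$ shift through both recursions and check that the $\bigl(1-\beta_{k-1}/\beta_k\bigr)x_0$ and $\eta_k\alpha_k x_0$ terms coincide.
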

\begin{proof}[Proof of \autoref{prop:mdda}.] We start by deriving the SGD update for $h^{(k)}$. 
\begin{align}\label{eq:MDupdatetimvar}
    x_{k+1}&= x_{k}-\eta_kg_k-\eta_k\alpha_k x_{k}.
\end{align}
We now rewrite the update 
of dual averaging. By evaluating \eqref{eq:DA_update} at iterations $k$ and $k-1$, we obtain:
\begin{align}\label{eq:DAvar}
    \begin{cases}
      x_{k+1}= -s_k/\beta_k\\
        x_{k}= -s_{k-1}/\beta_{k-1}
    \end{cases}\implies x_{k+1}= x_{k}-\frac{\lambda_k}{\beta_k}g_k-\left(1-\frac{\beta_{k-1}}{\beta_k}\right)x_{k}.
\end{align}
By comparing \eqref{eq:MDupdatetimvar} and \eqref{eq:DAvar}, we obtain \eqref{eq:right_params_choice}.
\end{proof}

\autoref{prop:mdda} shows that dual averaging implicitly induces a time-varying $L_2$-regularization to an SGD update. 

\paragraph{Modernized dual averaging.} The \textit{modernized dual averaging} (MDA) algorithm, our adaptation of dual averaging for deep learning optimization, is given in \autoref{alg:DA_momentum}.

\begin{algorithm}
\begin{algorithmic} 
\STATE \textbf{Input}: $x_0\in \mathbb{R}^n$ initial point, $\eta_k>0$ stepsize sequence, $c_k$ momentum parameter sequence, $T>0$ stopping time.
%$\tau_{anl}>0$ annealing time, $\gamma \in (0,1)$ annealing coefficient.\\
\STATE  Initialize $s_{-1} = 0$.
%\lambda_0 \nabla f(x_0,\xi_0).$
\FOR{$k=0\dots T-1$}
\STATE Set the scaling coefficient $\beta_k=\sqrt{k+1}$ and the stepsize $\lambda_k=\eta_k\sqrt{k+1}.$ 
\STATE Sample $\xi_k \sim P$ and compute stochastic gradient $g_k=\nabla f(x_k,\xi_k).$\STATE $s_{k}=s_{k-1}+\lambda_k g_k$ \COMMENT{Update the sum of gradients}
\STATE $ z_{k+1} =x_0-s_k/\beta_k$ \COMMENT{Update the dual averaging iterate}
\STATE $x_{k+1}=(1-c_{k+1})x_k+c_{k+1} z_{k+1}$ \COMMENT{Update the averaged iterate}
%\STATE Update $\Lambda(k)$ and $c(k)$ using \autoref{proc:scheduling} if  $k=\tau$ and leave unchanged otherwise.
%\IF{k = $\tau_{anl}$}
%  \STATE $ \Lambda = \Lambda \cdot \gamma$ and  $ c = \min\{1, c /\gamma\} $ \COMMENT{Anneal the stepsize and averaging parameters}
%\ENDIF
\ENDFOR
\RETURN $x_{T} .$
\end{algorithmic}
\caption{\label{alg:DA_momentum}Modernized dual averaging (MDA)}
\end{algorithm}

% TODO: We really need to introduce the averaging form here.

MDA differs from dual averaging in the following fundamental ways. Firstly, it maintains an iterate $x_{k+1}$ obtained as a weighted average of the previous average $x_k$ and the current dual averaging iterate $z_{k+1}.$ It has been recently noticed that this averaging step can be interpreted as introducing momentum in the algorithm \citep{sebbouh2020convergence,tao2018primal} (more details in \autoref{sec:sgdmom_proof}). For this reason, we will refer to $c_k$ as the momentum parameter. While dual averaging with double averaging has already been introduced in \cite{nesterov2015quasi}, it is our choices of parameters that make it suitable for non-convex deep learning objectives. In particular, our choice of $\beta_k$ and $\lambda_k$, motivated by a careful analysis of our theoretical convergence rate bounds,  result in the following adaptive regularization when viewed as regularized SGD with momentum:
\begin{align}\label{eq:params_noncvx}
     \alpha_k = \frac{\sqrt{k+2}-\sqrt{k+1}}{\eta_k \sqrt{k+2}} \approx\frac{1}{k+2}
\end{align}

In practice, a schedule of the momentum parameter (in our case $c_k$) and learning rate ($\eta_k$) must also be chosen to get the best performance out of the method. We found that the schedules used for SGD or Adam can be adapted for MDA with small modifications. For CV tasks, we found it was most effective to use modifications of existing stage-wise schemes where instead of a sudden decrease at the end of each stage, the learning rate decreases linearly to the next stages value, over the course of a few epochs. For NLP problems, a warmup stage is necessary following the same schedule typically used for Adam. Linear decay, rather than inverse-sqrt schedules, were the most effective post-warmup. 

For the momentum parameter, in each case the initial value can be chosen to match the momentum $\beta$ used for other methods, with the mapping $c_{k}=1-\beta$. Our theory suggests that when the learning rate is decreased, $c_{k}$ should be increased proportionally (up to a maximum of 1) so we used this rule in our experiments, however it doesn't make a large difference.

\vspace*{-.3cm}

\section{Convergence analysis}
\label{sec:convergence}Our analysis requires the following assumptions. We assume that $f$ has Lipschitz gradients but in not necessarily convex.
Similarly to \cite{robbins1951stochastic}, we assume unbiasedness of the gradient estimate and boundedness of the variance.
\begin{assumption}[Stochastic gradient oracle] \label{ass:sto_grad}
We make the two following assumptions.\\
(A1) Unbiased oracle: $\mathbb{E}_{\xi \sim P}[\nabla f(x,\xi)]= \nabla f(x)$.\\
(A2) Bounded second moment: $\mathbb{E}_{\xi \sim P}[\|\nabla f(x,\xi)\|_2^2]\leq \sigma^2$.
\end{assumption}
\begin{assumption}[Boundedness of the domain]\label{ass:bounddomain}
Let $x_0 \in \mathbb{R}^n$. Then, we assume that there exists $R>0$ such that $R^2=\sup_{x\in \mathbb{R}^d} \|x-x_0\|_2^2 < \infty.$
\end{assumption}

\begin{theorem}\label{thm:mda}
  Let $f$ be a Lipschitz-smooth function with minimum $f^*$. Let \autoref{ass:sto_grad} and \autoref{ass:bounddomain} for \eqref{eq:prob} hold. Let $x_0 \in \mathbb{R}^n$ be the initial point of MDA. Assume that we run MDA for $T$ iterations. Let $z_1,\dots,z_T$ and $x_1,\dots,x_T$ be the points returned by MDA and set $\lambda_k=\eta_k \sqrt{k+1}$, $\beta_k=\sqrt{k+1}$ and $c_k=c$ where $\eta_k = 1/\sqrt{T}$ and $c\in (0,1].$ Assume that $T\geq L^2/c^2.$ Then, we have:
  \begin{equation}\label{eq:da_rate}
\begin{split}
        &\frac{1}{2T}\sum_{k=0}^T(\|\nabla f(x_k)\|_2^2+\|\nabla f(z_k)\|_2^2)\\
        &\leq \frac{2((f(x_{0})-f^*)-\mathbb{E}[f(z_{T+1})-f^*])}{\sqrt{T}}\\
        &+2\left(\frac{1}{c}-1\right)\frac{(L+1)(f(x_{0})-f^*)-(L+\alpha_T)\mathbb{E}[f(x_{T})-f^*] }{T}\\
    %    &-\frac{1}{2\sqrt{T}}\left(\frac{1}{c}-1\right)\sum_{k=0}^T\frac{f(x_{k})-f^*}{ (k+2)^{3/2}\sqrt{k+3}} \\
    %
    %
    %
    &+ 2 \left[\left(\frac{L}{\sqrt{T}} +\frac{\log(T+1)}{T}\right)\sigma^2+ \left( \frac{L\log(T)}{T}+\frac{2\log(T)}{\sqrt{T}} \right) R^2\right],\\
       % &+(L+\alpha_k) \alpha_kR^2+\frac{\alpha_kR^2}{\eta_k} \\
 \end{split}
\end{equation}
where $f^*$ is the value of $f$ at a stationary point and $\alpha_T = \sqrt{T}\left(1 - \frac{\sqrt{T+1}}{\sqrt{T+2}}\right).$
\end{theorem}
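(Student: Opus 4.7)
The plan is to leverage \autoref{prop:mdda} to recast MDA as stochastic gradient descent with momentum (SGD+M) applied to a sequence of regularized objectives $h^{(k)}(x) = f(x) + (\alpha_k/2)\|x-x_0\|^2$, with $\alpha_k = (\beta_k - \beta_{k-1})/\lambda_k$ playing the role of a decaying $L_2$-regularization. Under the MDA parameterization $\beta_k=\sqrt{k+1}$ and $\lambda_k=\eta_k\sqrt{k+1}$, equation \eqref{eq:params_noncvx} gives $\alpha_k = O(1/k)$. The averaging step $x_{k+1}=(1-c_{k+1})x_k + c_{k+1}z_{k+1}$ matches the stochastic primal averaging form of momentum, so I would adapt the Lyapunov-function analysis of \cite{defazio2020mom} for SGD+M, adjusting it for the extra regularization pull toward $x_0$.

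First I would rewrite the MDA update purely in terms of $z_k$ and $x_k$: from $z_{k+1}=x_0-s_k/\beta_k$ and $z_k=x_0-s_{k-1}/\beta_{k-1}$, algebraic manipulation (as in \eqref{eq:DAvar}) yields a recursion of the form $z_{k+1} = z_k - \eta_k g_k - \eta_k\alpha_k(z_k-x_0)$, where $g_k=\nabla f(x_k,\xi_k)$. Next I would apply $L$-smoothness of $f$ to derive two companion descent inequalities, one bounding $f(z_{k+1})-f(x_k)$ and one bounding $f(x_{k+1})-f(x_k)$, noting that the effective smoothness of $h^{(k)}$ is $L+\alpha_k$, which explains the $(L+\alpha_T)$ factor in the bound. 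Conditional expectations together with Assumption \ref{ass:sto_grad}(A1)--(A2) convert cross terms $\langle \nabla f(x_k),g_k\rangle$ into $\|\nabla f(x_k)\|_2^2$ and absorb noise into a $\sigma^2$ contribution; Assumption \ref{ass:bounddomain} bounds the residual $\alpha_k\|z_k-x_0\|_2^2$ and $\alpha_k\|x_k-x_0\|_2^2$ terms by $\alpha_k R^2$.

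The crucial step would be to combine the two descent inequalities into a single Lyapunov functional of the form
\begin{equation*}
V_k = (f(x_k)-f^*) + \frac{1-c}{c}\bigl(f(z_k)-f(x_k)\bigr),
\end{equation*}
telescoping across $k=0,\dots,T$ so that the two endpoint contributions $(f(x_0)-f^*)-\mathbb{E}[f(z_{T+1})-f^*]$ and $(L+1)(f(x_0)-f^*)-(L+\alpha_T)\mathbb{E}[f(x_T)-f^*]$ appear naturally, and the prefactor $1/c - 1$ materializes from the weighting. The $\log(T)$ factors then come from $\sum_{k=1}^T\alpha_k = O(\log T)$ and $\sum_{k=1}^T \eta_k L/(k+1) = O(\log T/\sqrt T)$ when accumulating the regularization and variance contributions. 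The condition $T\geq L^2/c^2$ ensures that $\eta_k=1/\sqrt T$ is small enough relative to $L$ for the smoothness-based descent to dominate the curvature terms $(L c_{k+1}^2/2)\|z_{k+1}-x_k\|_2^2$.

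The main obstacle, I expect, is coordinating three time-varying moving pieces: the regularization $\alpha_k$, the momentum blending $c_{k+1}$, and the mismatch that gradients are evaluated at $x_k$ while the left-hand side of \eqref{eq:da_rate} also contains $\|\nabla f(z_k)\|_2^2$. To bridge this last point I would invoke $L$-smoothness to bound $\|\nabla f(z_k)-\nabla f(x_k)\|_2^2 \leq L^2\|z_k-x_k\|_2^2$, and separately control $\|z_k-x_k\|_2^2$ via the relation $z_k-x_k = \tfrac{1-c_k}{c_k}(x_k-x_{k-1})/c_k$ implied by the averaging step, ultimately reducing the extra $\|\nabla f(z_k)\|_2^2$ contribution to terms already accounted for on the right-hand side.
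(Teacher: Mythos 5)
Your overall strategy coincides with the paper's: use \autoref{prop:mdda} to view MDA as stochastic primal averaging (momentum SGD) applied to the regularized objectives $h^{(k)}(x)=f(x)+\frac{\alpha_k}{2}\|x-x_0\|_2^2$, invoke the non-convex SPA analysis of \citet{defazio2020mom} (restated as \autoref{thm:sgd_rate}), and telescope, with \autoref{ass:bounddomain} absorbing the $\alpha_k\|\cdot-x_0\|_2^2$ residuals and $\sum_k 1/(k+1)$ producing the logarithms. Several of your side remarks check out: the recursion $z_{k+1}=z_k-\eta_k g_k-\eta_k\alpha_k(z_k-x_0)$ is correct, and the paper likewise pays a factor of $2$ on the left-hand side (via $\|a+b\|_2^2\le 2\|a\|_2^2+2\|b\|_2^2$) to pass from $\|\nabla h^{(k)}\|_2^2$ to $\|\nabla f\|_2^2$. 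Your plan to handle $\|\nabla f(z_k)\|_2^2$ separately through $L^2\|z_k-x_k\|_2^2$ is an unnecessary detour, since the SPA theorem already delivers both gradient norms on the left-hand side.

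The genuine gap is in your ``crucial step'': the functional $V_k=(f(x_k)-f^*)+\frac{1-c}{c}\bigl(f(z_k)-f(x_k)\bigr)$ is not the Lyapunov function that makes the momentum analysis close, and as written it would fail. First, it omits the displacement term $\frac{L_{h^{(k)}}}{2\eta^2c^2}\|x_{k+1}-x_k\|_2^2$ that the paper's $\Gamma_k$ in \eqref{eq:lyap_def} carries; that term is precisely what absorbs the $\|x_k-x_{k-1}\|_2^2$ error generated by the momentum step, and the stepsize condition \eqref{eq:cond_lambda} (whence $T\ge L^2/c^2$) arises from forcing its net coefficient to be non-positive --- without it there is nothing to cancel those terms against and the telescoping does not close. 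Second, $V_k=\frac{2c-1}{c}(f(x_k)-f^*)+\frac{1-c}{c}(f(z_k)-f^*)$, so for $c<1/2$ the weight on $f(x_k)-f^*$ is negative and $V_k$ is not a valid potential; the paper instead weights $h^{(k)}(z_{k+1})$ by $1/\eta^2$ and $h^{(k)}(x_k)$ by $\frac{L_{h^{(k)}}}{\eta}\left(\frac{1}{c}-1\right)$, which is also what produces the $(L+1)$ and $(L+\alpha_T)$ endpoint factors you want. Finally, because $\alpha_k$ varies while $\eta$ and $c$ are constant, the difference $\Gamma_k-\mathbb{E}_k[\Gamma_{k+1}]$ is not purely telescoping: it leaves behind terms proportional to $(\alpha_{k+1}-\alpha_k)\|z_{k+1}-x_0\|_2^2$, $(\alpha_{k+1}-\alpha_k)f(x_k)$ and $(\alpha_{k+1}-\alpha_k)\|x_{k+1}-x_k\|_2^2$, which must be shown non-positive using the monotonicity estimates \eqref{eq:decrease_alpha}--\eqref{eq:24}; you flag the ``time-varying pieces'' as the main obstacle but do not supply this argument.
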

\autoref{thm:mda} informs us that the convergence rate of MDA to a stationary point is of $O(1/\sqrt{T})$ and is similar to the one obtained with SGD. A proof of this statement can be found in \autoref{sec:mda_proof}.

%%%%%%%%%%%%%%%%%%%%%%%%%%%%%%%%%%%%%%%%%%%%%%%%%%%%%%%%%%%%%%%%%%%%%%%%%%%%%%%%%%%%%%%%%%%%%%%%%%%%%%%%%%%%%%%%%%%%%%%%%%%%%%%%%%%%%%%%%%%%%%%%%%%%%%%%%%%%%%%%%%%%%%%%%%%%%%%%%%%%%%%%%%%%%%%%%%%%%%%%%%%%%%%%%%%%%%%%%%%%%%%%%

\iffalse
\begin{theorem}\label{thm:mda}
  Let $f$ be a smooth function. Let $x_1,\dots,x_T$ be the points returned by MDA and set $\lambda_k=\Lambda \sqrt{k+1}$, $\beta_k=\sqrt{k+1}$ and $c_k=0$ where $\Lambda =T^{-1/8}$. 
  Then, we have:
  \begin{equation}\label{eq:da_rate}
 \begin{split}
    \frac{1}{T}\sum_{k=0}^T\|\nabla f(x_k)\|_2^2&\leq  \frac{2(f(x_0)-f^*)}{T^{7/8}}
    +\frac{2\zeta(2)R^2 }{ T^{3/4}}+ \frac{2\zeta(4) R^2L}{T^{5/8}}  + \frac{2\zeta(5)R^2}{\sqrt{T}}\\
    &+ 2\left(\frac{L}{T^{9/8}} +\frac{\log(T)}{T} \right)\sigma^2.
\end{split}
 \end{equation}
\end{theorem}
\fi

% TeXShop support
% !TEX root = main.tex

\section{Numerical experiments}\label{sec:num_exp}
We investigate the numerical performance of MDA on a wide
range of learning tasks, including image classification, MRI reconstruction,
neural machine translation (NMT) and language modeling. We performed a comparison against both SGD+M and Adam. Depending on the task, one of these two methods is considered the current state-of-the-art for the problem. 
For each task, to enable a fair comparison, we perform a grid-search over step-sizes, weight decay and learning rate schedule to obtain the best result for each method. 
For our CV experiments we use the torchvision package, and for our NLP experiments we use fairseq \citep{ott2019fairseq}.
We now briefly explain each of the learning tasks and present our results. Details of our experimental setup and experiments on Wikitext-103 can be respectively found in \autoref{sec:experimental-setup} and \autoref{sec:further_exps}.

\subsection{Image classification}
We run our image classification experiments on the CIFAR-10 and ImageNet datasets. The CIFAR-10 dataset consists of 50k training images and 10k testing images. The ILSVRC 2012 ImageNet dataset has 1.2M
training images and 50k validation images. We train a pre-activation ResNet-152 on CIFAR-10 and a ResNet-50 model for ImageNet. Both architectures are commonly used base-lines for these problems.
We follow the settings described in \cite{he2016deep} for training. 
\autoref{fig:cv} (a) represents the accuracy obtained on CIFAR-10. MDA achieves a slightly better accuracy compared to SGD with momentum (by 0.36\%). This is an interesting result as SGD with momentum serves as first-order benchmark on CIFAR-10. We speculate this difference is due to the beneficial properties of the decaying regularization that MDA contains. 
\autoref{fig:cv} (b) represents the accuracy obtained on ImageNet. In this case the difference between MDA and SGD+M is within the standard errors, with Adam trailing far behind.

\begin{figure}[t]
 
    \begin{minipage}{.5\textwidth}
        \includegraphics[width=.95\linewidth]{./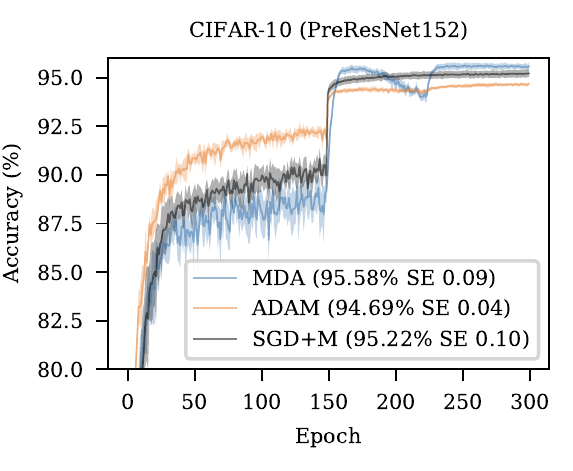}
        \vspace*{-.6cm}
        \caption*{(a)}
     \end{minipage}%
     \begin{minipage}{.5\textwidth}
        \includegraphics[width=.95\linewidth]{./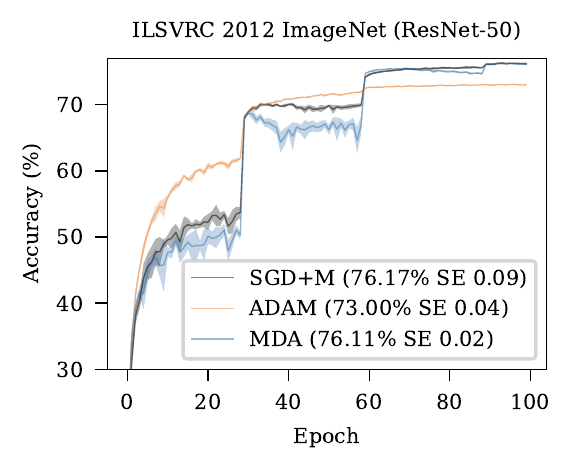}
        \vspace*{-.6cm}
        \caption*{(b)}
     \end{minipage}%

    \begin{minipage}{.5\textwidth}
        \includegraphics[width=.95\linewidth]{./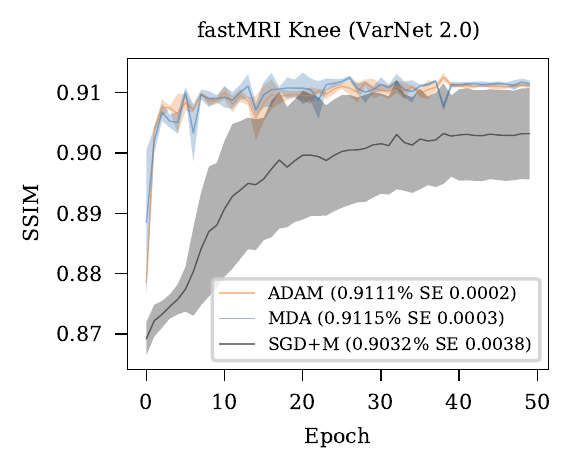}
        \vspace*{-.6cm}
        \caption*{(c)}
     \end{minipage}%
      \begin{minipage}{.5\textwidth}
        \center
        CIFAR-10 Ablations \\
        \vspace*{0.3em}
        
        \begin{tabular}{|c|c|}
        \hline 
        Variants & Test Accuracy\tabularnewline
        \hline 
        \hline 
        Dual averaging & 87.09\%\tabularnewline
        \hline 
        + Momentum & 90.80\%\tabularnewline
        \hline 
        + $\lambda\propto\sqrt{k+1}$ & \textbf{95.58}\%\tabularnewline
        \hline 
        \end{tabular}
        \caption*{(d)}
     \end{minipage}
     
\vspace*{-.3cm}     
     
 \caption{\label{fig:cv}(a): Test accuracy of SGD with momentum and MDA on CIFAR-10 and (b) on ImageNet. (c): Test SSIM of Adam and MDA on fastMRI Knee dataset. MDA matches the performance of the best first-order methods in these computer vision tasks. (d): Ablation study for MDA on CIFAR-10. }

 %   \begin{minipage}{.5\textwidth}
 %   \captionof{figure}{\label{fig:cv}(a): Test accuracy of SGD with momentum and MDA on CIFAR-10 and (b) on ImageNet. (c): Test SSIM of Adam and MDA on fastMRI Knee dataset. MDA matches the performance of the best first-order methods in these computer vision tasks.}
 %    \end{minipage}%  
     \vspace*{-.5cm}

\end{figure}

\subsection{MRI reconstruction}
For our MRI reconstruction task we used the fastMRI knee dataset \citep{zbontar2018fastmri}. It consists of more than 10k training examples from approximately 1.5k fully sampled knee MRIs.  The fastMRI challenge is a kind of image-to-image prediction task where the model must predict a MRI image from raw data represented as ``k-space'' image. We trained the VarNet 2.0 model introduced by \cite{sriram2020end}, which is currently the state-of-the-art for this task. We used 12 cascades, batch-size 8, a 4x acceleration factor, 16 center lines and the masking procedure described in \cite{defazio2019offset}. \autoref{fig:cv} (c) shows the SSIM scores obtained for each method. We observe that MDA performs slightly better than Adam, although the difference is within the standard error. SGD performs particularly badly for this task, no mater what tuning of learning rate schedule is tried. The visual difference between reconstructions given by the best SGD trained model, versus the best model from the other two methods is readily apparent when compared at the pixel level (\autoref{fig:reconstructions}).

\begin{figure}[t]
    \begin{minipage}{.5\textwidth}
        \includegraphics[width=.475\linewidth]{./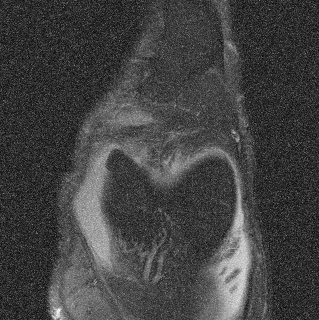}
        \caption*{Ground truth}
     \end{minipage}%
     \begin{minipage}{.5\textwidth}
        \includegraphics[width=.95\linewidth]{./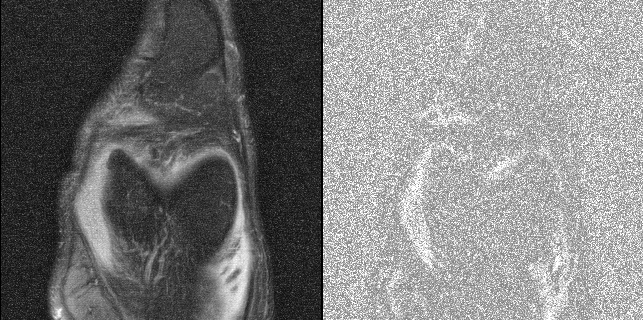}
        \caption*{MDA (SSIM 0.48844}
     \end{minipage}%

    \begin{minipage}{.5\textwidth}
        \includegraphics[width=.95\linewidth]{./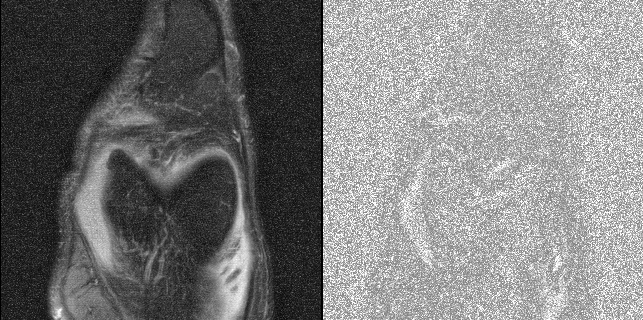}
        \caption*{ADAM (SSIM 0.48744)}
     \end{minipage}%     
     \begin{minipage}{.5\textwidth}
        \includegraphics[width=.95\linewidth]{./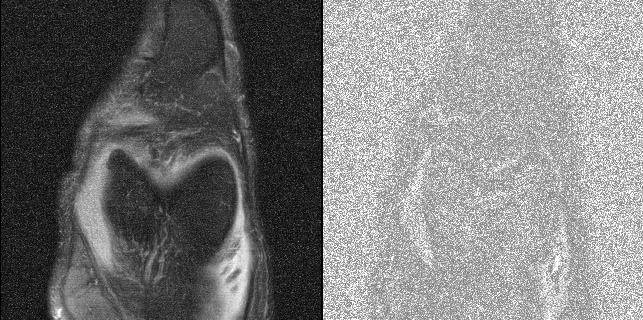}
        \caption*{SGD (SSIM 0.48361)}
     \end{minipage}%     

    \captionof{figure}{\label{fig:reconstructions} Reconstruction images for an illustrative knee slice for the same model trained with each of the 3 methods, using the best model for the seeds for each. The difference image between the ground-truth and the noise is shown on the right.}
     \vspace*{-.3cm}

\end{figure}

\subsection{Neural Machine Translation (NMT)}
We run our machine translation task on the IWSLT'14 German-to-English (De-En) dataset (approximately 160k sentence pairs) \citep{cettolo2014report}. We use a Transformer architecture and follow the settings reported in \cite{ott2019fairseq}, using the pre-normalization described in \cite{wang2019learning}. 
The length penalty is set to 0.6 and the beam size is set to 5. For NMT, BLEU score is used \citep{papineni2002bleu}. We report the results of the best checkpoints with respect to the BLEU score averaged over 20 seeds. We report tokenized case-insensitive BLEU. \autoref{fig:nlp} reports the training loss and the BLEU score on the test set of SGD, MDA and Adam on IWSLT'14. SGD as reported in \citep{yao2020adahessian} performs worse than the other methods. While Adam and MDA match in terms of training loss,  MDA outperforms Adam (by 0.20) on the test set. Despite containing no adaptivity, the MDA method is as capable as Adam for this task.
 
\begin{figure}[t]
    \begin{minipage}{.5\textwidth}
    	\centering
        \includegraphics[width=.85\linewidth]{./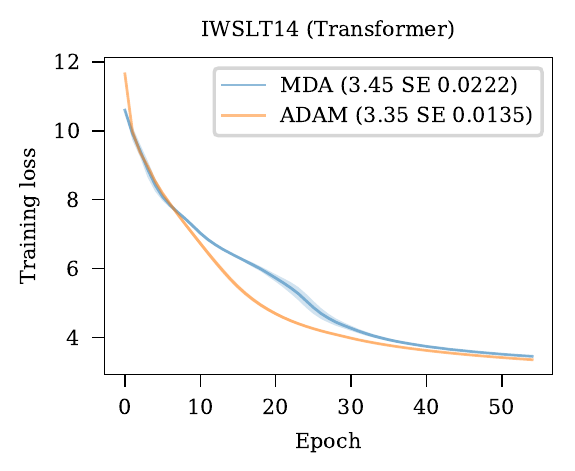}
     \end{minipage}%
       \begin{minipage}{.5\textwidth}
	\centering
        \includegraphics[width=.85\linewidth]{./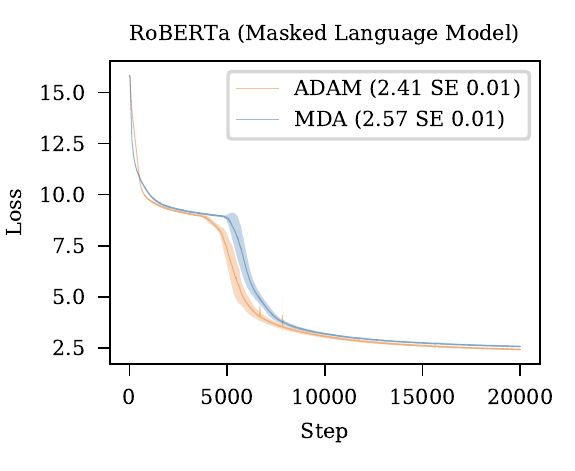}
     \end{minipage}%
   
  \begin{minipage}{0.5\textwidth}\hspace{2em}
        \resizebox{.9\textwidth}{!}{
 \begin{tabular}{|l|c|c|c|}
  \hline
   & SGD & Adam & MDA \\
  \hline
  BLEU & $28.45$ & $34.10\pm0.13$ & $34.18\pm0.11$\\
  \hline
\end{tabular}} 
\end{minipage}
\vspace*{-.2cm}
\caption{Left: performance of Adam and MDA in NMT on IWSLT'14. The plot shows the training loss convergence, while the table provides the BLEU score on the test set. Right: performance of Adam and MDA on RoBERTa training.}\label{fig:nlp}
%\vspace*{-.6cm}

\end{figure}

\subsection{Masked Language Modeling}
Our largest comparison was on the task of masked language modeling. Pretraining using masked language models has quickly become a standard approach within the natural language processing community \citep{bert}, so it serves as a large-scale, realistic task that is representative of optimization in modern NLP. We used the RoBERTa variant of the BERT model \citep{liu2019roberta}, as implemented in fairseq, training on the concatenation of BookCorpus \citep{zhu2015aligning} and English Wikipedia. \autoref{fig:nlp} shows the training loss for Adam and MDA; SGD fails on this task. MDA's learning curve is virtually identical to Adam. The ``elbow'' shape of the graph is due to the training reaching the end of the first epoch around step 4000. On validation data, MDA achieves a perplexity of 5.3 broadly comparable to the 4.95 value of Adam. As we are using hyper-parameters tuned for Adam, we believe this small gap can be further closed with additional tuning.

\subsection{Ablation study}
As our approach builds upon regular dual averaging, we performed an ablation study on CIFAR-10 to assess the improvement from our changes. We ran each method with a sweep of learning rates, both with flat learning rate schedules and the standard stage-wise scheme. The results are shown in \autoref{fig:cv} (d). Regular dual averaging performs extremely poorly on this task, which may explain why dual averaging variants have seen no use that we are aware of for deep neural network optimization.  The best hyper-parameter combination was LR 1 with the flat LR scheme. We report the results based on the last-iterate, rather than a random iterate (required by the theory), since such post-hoc sampling performs poorly for non-convex tasks. The addition of momentum in the form of iterate averaging within the method brings the test accuracy up by  3.7\%, and allows for the use of a larger learning rate of 2.5. The largest increase is from the use of an increasing lambda sequence, which improves performance by a further 4.78\%.

\iffalse

\begin{figure}
\center
\begin{tabular}{|c|c|}
\hline 
Variants & Test Accuracy\tabularnewline
\hline 
\hline 
Dual Averaging & 87.09\%\tabularnewline
\hline 
+ Momentum & 90.80\%\tabularnewline
\hline 
+ $\lambda\propto\sqrt{k+1}$ & \textbf{95.58}\%\tabularnewline
\hline 
\end{tabular}
\caption{\label{tbl:ablation}Ablation study for dual averaging}
\end{figure}

\fi

% TeXShop support
% !TEX root = main.tex

\section{Tips for usage}
When applying the MDA algorithm to a new problem, we found the following guidelines to be useful:
\begin{itemize}
\item MDA may be slower than other methods at the early iterations. It is important to run the method to convergence when comparing against other methods.
\item The amount of weight decay required when using MDA is often significantly lower than for SGD+M or Adam. We recommend trying a sweep with a maximum of the default for SGD+M or Adam, and a minimum of zero.
\item Learning rates for MDA are much larger than SGD+M or Adam, due to their different parameterizations. When comparing to SGD+M with learning rate $\eta$ and momentum $\beta$, a value comparable to $\eta/(1-\beta)$ is a good starting point. On NLP problems, learning rates as large at 15.0 are sometimes effective.
\end{itemize}

\section{Conclusion}
Based on our experiments, the MDA algorithm appears to be a good choice for a general purpose optimization algorithm for the non-convex problems encountered in deep learning. It avoids the sometimes suboptimal test performance of Adam, while converging on problems where SGD+M fails to work well. Unlike Adam which has no general convergence theory for non-convex problems under the standard hyper-parameter settings, we have proven convergence of MDA under realistic hyper-parameter settings for non-convex problems. It remains an open question why MDA is able to provide the best result in SGD and Adam worlds.
%
%Our numerical investigation in \autoref{sec:num_exp} demonstrate the effectiveness of MDA in deep learning optimization. We remind that MDA essentially consists in an update that looks as weighted SGD. While it is able to match the performance of SGD with momentum in image classification, it is surprising that it is able to perform as well as Adam in MRI reconstruction, NMT and language modeling. Indeed, in these tasks, SGD performs significantly worse than Adam. Our experiments also seem to suggest that the reason of the performance difference between SGD and Adam is not due to the adaptivity of the algorithm.

\paragraph{Acknowledgements.}
Samy Jelassi would like to thank Walid Krichene, Damien Scieur, Yuanzhi Li,  Robert M. Gower and Michael Rabbat for helpful discussions and Anne Wu for her precious help in setting some numerical experiments. 

\bibliography{main}
\bibliographystyle{iclr2021_conference}
\pagebreak
\appendix
% TeXShop support
% !TEX root = main.tex

\paragraph{Notation.} Let $x_0,...,x_k$ be the iterates returned by a stochastic algorithm. We use $\mathcal{F}_k$ to refer to the filtration with respect to $x_0,...,x_k$. For a random variable $Y,$ $\mathbb{E}_k[Y]$ denote the expectation of a random variable $Y$ conditioned on $\mathcal{F}_k$ i.e.\ $\mathbb{E}_k[Y] = \mathbb{E}[Y|\mathcal{F}_k].$

\begin{lemma}[LEMMA 1.2.3, \cite{Nesterov-convex}]\label{lem:smooth}
Suppose that $f$ is differentiable and has $L$-Lipschitz gradients, then: 
   \begin{align}\label{eq:smoothf}
   \left| f(x)-f(y)-\langle \nabla f(y),x-y\rangle \right| \leq \frac{L}{2}\|x-y\|_2^2, \qquad \forall x,y\in \mathbb{R}^n.
\end{align}
\end{lemma}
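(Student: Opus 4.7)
The plan is to use the fundamental theorem of calculus along the segment joining $y$ and $x$, then apply the Lipschitz hypothesis on $\nabla f$ combined with Cauchy--Schwarz. Concretely, I would introduce the auxiliary function $\varphi(t) = f(y + t(x-y))$ on $t\in[0,1]$, which is differentiable with $\varphi'(t) = \langle \nabla f(y+t(x-y)), x-y\rangle$ by the chain rule. Integrating from $0$ to $1$ gives the identity
\begin{equation*}
f(x) - f(y) = \int_0^1 \langle \nabla f(y+t(x-y)), x-y\rangle \, dt.
\end{equation*}

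Next I would subtract the inner product $\langle \nabla f(y), x-y\rangle$ from both sides, which can be written as $\int_0^1 \langle \nabla f(y), x-y\rangle \, dt$ since the integrand is constant in $t$. Combining the two integrals yields
\begin{equation*}
f(x) - f(y) - \langle \nabla f(y), x-y\rangle = \int_0^1 \langle \nabla f(y+t(x-y)) - \nabla f(y),\, x-y\rangle \, dt.
\end{equation*}

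Then I would take absolute values, pull the absolute value inside the integral, and apply Cauchy--Schwarz pointwise to get the integrand bounded by $\|\nabla f(y+t(x-y)) - \nabla f(y)\|_2 \cdot \|x-y\|_2$. The $L$-Lipschitz assumption on $\nabla f$ bounds the first factor by $L\cdot t\|x-y\|_2$, producing an integrand of the form $L\,t\,\|x-y\|_2^2$. Finally, $\int_0^1 t\, dt = 1/2$ delivers the desired bound $\tfrac{L}{2}\|x-y\|_2^2$.

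There is no real obstacle here: the argument is a textbook derivation and the only care needed is to keep the absolute value outside the inner product until after Cauchy--Schwarz is applied, and to correctly identify the factor of $t$ that comes from the Lipschitz estimate $\|\nabla f(y+t(x-y)) - \nabla f(y)\|_2 \le L\|t(x-y)\|_2 = Lt\|x-y\|_2$. This yields the two-sided inequality simultaneously, since we bounded the absolute value of the remainder rather than a signed quantity.
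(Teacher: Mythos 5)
Your argument is correct and is exactly the standard derivation of this lemma (the paper itself does not reprove it but cites Nesterov's Lemma 1.2.3, whose proof is precisely this: parametrize the segment, apply the fundamental theorem of calculus, subtract the linear term, and bound via Cauchy--Schwarz and the Lipschitz estimate $\|\nabla f(y+t(x-y))-\nabla f(y)\|_2\le Lt\|x-y\|_2$ before integrating). No gaps; the two-sided bound follows as you note because you control the absolute value of the remainder.
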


\section{Convergence analysis of non-convex SGD+M}\label{sec:sgdmom_proof}

We remind that the SGD+M algorithm is commonly written in the following form
\begin{equation}
    \begin{split}
        m_{k+1}&=\beta_km_k+\nabla f(x_k,\xi_k),\\
        x_{k+1}&=x_k-\alpha_k m_{k+1},
    \end{split}
\end{equation}
where $x_k$ is the iterate sequence, and $m_k$ is the momentum buffer. Instead, we will make use of the averaging form of the momentum method \cite{sebbouh2020convergence} also known as the stochastic primal averaging (SPA) form \cite{tao2018primal}:
\begin{equation}
    \begin{split}
    z_{k+1}&=z_k-\eta_k\nabla f(x_k,\xi_k),\\
    x_{k+1}&=(1-c_{k+1})x_k+c_{k+1}z_{k+1}.
    \end{split}
\end{equation}

For specific choices of values for the hyper-parameters,the $x_k$ sequence generated by this method will be identical to that of SGD+M. We make use of the convergence analysis of non-convex SPA by \citet{defazio2020mom}.
\begin{theorem}\label{thm:sgd_rate}
Let $f$ be a $L$-smooth function. For a fixed step $k$, let $\eta_k>0$ be the stepsize and $c_k$ the averaging parameter in SPA. Let $x_k$ and $z_k$ be the iterates in SPA. Then, we have:
  \begin{equation}\label{eq:sgd_rate}
 \begin{split}
\frac{1}{2\eta_k}\|\nabla f(x_k)\|_2^2+\frac{1}{2\eta_k}\|\nabla f(z_k)\|_2^2&\leq \Gamma_k-\mathbb{E}_{k}[\Gamma_{k+1}] +L\mathbb{E}_{k}[\|\nabla f(x_k,\xi_k)\|_2^2]\\
&+\frac{1}{2}\left[\frac{1}{\eta_k^2}\left(\frac{1}{c_k}-1+\eta_kL\right)\left(\frac{1}{c_k} -1\right)+\frac{L}{\eta_k}\left(\frac{1}{c_k} -1\right)^2\right.\\
&\left. -\frac{1}{\eta_{k-1}^2c_k^2}\right] L \|x_{k}-x_{k-1}\|_2^2,
\end{split}
 \end{equation}
 where $\Gamma_k$ is the Lyapunov function defined as 
 \begin{align}
     \Gamma_k&=\frac{1}{\eta_k^2}f(z_{k+1}) + \frac{L}{\eta_k}\left(\frac{1}{c_k} -1\right)f(x_k)+\frac{L}{2\eta_k c_{k+1}^2}\|x_{k+1}-x_k\|_2^2.
 \end{align}
\end{theorem}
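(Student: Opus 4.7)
The inequality is a one-step Lyapunov descent bound. My plan is to apply the smoothness lemma (\autoref{lem:smooth}) twice—once at the dual iterate $z_k$ and once at the primary iterate $x_k$—then combine the two resulting inequalities with coefficients chosen so that the quantity $\Gamma_k - \mathbb{E}_k[\Gamma_{k+1}]$ emerges on the right-hand side, with the remainder being exactly the prescribed stochastic-noise and iterate-tracking error terms.

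First I would apply \autoref{lem:smooth} to $f(z_{k+1})$ using the SPA recursion $z_{k+1} - z_k = -\eta_k \nabla f(x_k, \xi_k)$. Taking $\mathbb{E}_k[\cdot]$ and using unbiasedness of the stochastic gradient, this yields a bound on $\mathbb{E}_k[f(z_{k+1})] - f(z_k)$ containing a cross term $-\eta_k\langle \nabla f(z_k), \nabla f(x_k)\rangle$. I would split this cross term by the polarization identity $\langle a, b\rangle = \tfrac{1}{2}\|a\|^2 + \tfrac{1}{2}\|b\|^2 - \tfrac{1}{2}\|a - b\|^2$, so that $\|\nabla f(z_k)\|^2$ and $\|\nabla f(x_k)\|^2$ appear with the desired sign, and bound the leftover $\|\nabla f(z_k) - \nabla f(x_k)\|^2 \leq L^2 \|z_k - x_k\|^2$ by Lipschitz smoothness. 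The averaging step $x_k = (1 - c_k)x_{k-1} + c_k z_k$ gives the key identity $z_k - x_k = (\tfrac{1}{c_k} - 1)(x_k - x_{k-1})$, which converts this penalty into the $\|x_k - x_{k-1}\|^2$ form in the theorem. Next I would apply \autoref{lem:smooth} to $f(x_{k+1})$ using $x_{k+1} - x_k = c_{k+1}(z_{k+1} - x_k)$; the resulting quadratic in $\|x_{k+1} - x_k\|^2$ decomposes, via $z_{k+1} = z_k - \eta_k g_k$, into a stochastic-gradient noise piece and a further $\|z_k - x_k\|^2$ piece that is handled analogously. Finally I would multiply the first inequality by $1/\eta_k^2$ and the second by $\tfrac{L}{\eta_k}(\tfrac{1}{c_k} - 1)$, then combine: the $f(z_{k+1})$, $f(x_k)$ and $\|x_{k+1} - x_k\|^2$ terms re-assemble into $\Gamma_k - \mathbb{E}_k[\Gamma_{k+1}]$, and the variance-style residuals add up to $L\mathbb{E}_k[\|\nabla f(x_k, \xi_k)\|^2]$.

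The main obstacle is the bookkeeping in that last assembly step: the coefficient of $\|x_k - x_{k-1}\|^2$ collects three separate contributions—namely $\tfrac{L^2}{2\eta_k}(\tfrac{1}{c_k} - 1)^2$ from the first-descent polarization penalty, $\tfrac{L}{2\eta_k^2}(\tfrac{1}{c_k} - 1 + \eta_k L)(\tfrac{1}{c_k} - 1)$ from expanding $\|x_{k+1} - x_k\|^2$ inside the Lyapunov tracking term, and a negative piece $-\tfrac{L}{2\eta_{k-1}^2 c_k^2}$ coming from the prior-step tracking contribution $\tfrac{L}{2\eta_{k-1}c_k^2}\|x_k - x_{k-1}\|^2$ being subtracted off under the reindexing. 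Checking that these three pieces combine exactly into the bracketed expression in the statement is routine but tedious, and the negative piece is what makes the Lyapunov telescope to a useful bound when later summed over $k$.
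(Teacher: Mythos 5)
The first thing to note is that the paper does not actually prove \autoref{thm:sgd_rate}: the result is imported from \citet{defazio2020mom} and used as a black box in the MDA analysis, so there is no in-paper proof to compare yours against line by line. Judged on its own, your sketch reconstructs the standard argument behind this kind of bound, and the key ingredients are the right ones: the descent lemma applied along the $z$-update, polarization of $\langle\nabla f(z_k),\nabla f(x_k)\rangle$ to place both squared gradient norms on the left with coefficient $\frac{1}{2\eta_k}$, and the identity $z_k-x_k=\left(\frac{1}{c_k}-1\right)(x_k-x_{k-1})$ to convert $\|\nabla f(z_k)-\nabla f(x_k)\|_2^2\le L^2\|z_k-x_k\|_2^2$ into the $\frac{L}{\eta_k}\left(\frac{1}{c_k}-1\right)^2$ entry of the bracket. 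Your accounting of the three contributions to the $\|x_k-x_{k-1}\|_2^2$ coefficient, including the negative $-\frac{1}{\eta_{k-1}^2c_k^2}$ piece inherited from the previous step's Lyapunov term, is consistent with the statement.

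The one place your sketch is too quick is the second stage. Expanding $\mathbb{E}_k\|x_{k+1}-x_k\|_2^2=c_{k+1}^2\,\mathbb{E}_k\|z_k-x_k-\eta_k\nabla f(x_k,\xi_k)\|_2^2$ inside the Lyapunov tracking term produces not only a quadratic piece and a noise piece but also a \emph{linear} cross term proportional to $\left(\frac{1}{c_k}-1\right)\langle\nabla f(x_k),x_k-x_{k-1}\rangle$; this is not a ``$\|z_k-x_k\|^2$ piece handled analogously.'' It has to be bounded by one more application of \autoref{lem:smooth} between consecutive $x$-iterates, and that step is precisely what generates the $\frac{L}{\eta_k}\left(\frac{1}{c_k}-1\right)f(x_k)$ component of $\Gamma_k$ and the $\eta_kL$ inside the factor $\left(\frac{1}{c_k}-1+\eta_kL\right)$; without it the Lyapunov function does not close. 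You should also be aware that the indices in the stated $\Gamma_k$ are internally inconsistent with the inequality it is meant to produce --- compare with \eqref{eq:lyap_def}, where the same object carries $\eta_k^2$ in the last term and is shifted by one --- so a careful write-up has to fix the indexing before the telescoping argument goes through.
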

 
% TeXShop support
% !TEX root = main.tex

\section{Convergence analysis of MDA}\label{sec:mda_proof}

This section is dedicated to the convergence proof of MDA. To obtain the rate in \autoref{thm:mda}, we use \autoref{prop:mdda} and \autoref{thm:sgd_rate}. We start by introducing some notations. 

\subsection{Notations and useful inequalities}\label{sec:notations}
At a fixed step step $k$, we remind that the function $h^{(k)}\colon \mathbb{R}^n\rightarrow \mathbb{R}$ is defined as
\begin{align}\label{eq:hk}
      h^{(k)}(x)&=f(x)+\frac{\alpha_k}{2}\|x-x_0\|^2.
 \end{align}
 We now introduce the following notions induced by $ h^{(k)}.$ $\Gamma_{k+1}$ is the Lyapunov function with respect to $h^{(k)}$ and is defined as 
\begin{align}\label{eq:lyap_def}
    \Gamma_{k+1}:=\frac{1}{\eta_{k}^2}h^{(k)}(z_{k+1}) +\frac{L_{h^{(k)}}}{\eta_k}\left(\frac{1}{c_k}-1\right)h^{(k)}(x_k)+\frac{1}{2}\frac{L_{h^{(k)}}}{\eta_k^2 c_{k+1}^2}\|x_{k+1}-x_k\|_2^2,
\end{align}
 $\nabla h^{(k)}(x_k,\xi_k)$ is the stochastic gradient of $h^{(k)}$ and is equal to
 \begin{align}\label{eq:nablah}
     \nabla h^{(k)}(x_k,\xi_k):=\nabla f(x_k,\xi_k)+\alpha_k(x_k-x_0),
 \end{align}
 
and $L_{h^{(k)}}$ is the smoothness constant of $h^{(k)}$,
 \begin{align}\label{eq:Lhk}
     L_{h^{(k)}}:=L+\alpha_k.
 \end{align}
 As our proof assumes that $\eta_k$ and $c_k$ are constant, we remind that our parameters choices in \autoref{alg:DA_momentum} are 
 \begin{align}\label{eq:remind_params_1}
     \eta_k = \eta \sqrt{k+1} \quad \text{ and } \quad \beta_k =\sqrt{k+1},
 \end{align}
 where $\eta>0.$
As a consequence, we have:
\begin{align}\label{eq:remind_params}
    \alpha_k = \frac{\sqrt{k+2}-\sqrt{k+1}}{\eta\sqrt{k+2}}.%,  \quad \quad \eta_k=\eta \quad \text{ and } \quad c_k=c \qquad \text{where } \eta >0\text{ and }c \in (0,1).
\end{align}
Therefore, $ \alpha_k $ is a non-increasing sequence and as a consequence, \begin{align}\label{eq:decrease_alpha}
\alpha_{k+1}-\alpha_k\leq 0.
\end{align}
It can further be shown that 
\begin{align}\label{eq:alphakk1diff}
    \alpha_{k}(\alpha_{k+1}-\alpha_k)\leq 0,
\end{align}
and
\begin{align}\label{eq:alphakk1diff1}
    \alpha_{k+1}(\alpha_{k+1}-\alpha_k)\leq 0.
\end{align}
Moreover, by using the inequality $\sqrt{k+2}-\sqrt{k+1} \leq (2\sqrt{k+1})^{-1}$, we have:
\begin{align}
    \alpha_k \leq \frac{1}{2\eta(k+1)}.
\end{align}
We will encounter the quantity $\alpha_{k+1}-\alpha_k$ in the proof and would like to upper bound it. We first start by upper bounding $\alpha_{k+1}-\alpha_k:$
\begin{align}
    \alpha_{k+1}-\alpha_k&=  \frac{\sqrt{k+3}-\sqrt{k+2}}{\eta\sqrt{k+3}}- \frac{\sqrt{k+2}-\sqrt{k+1}}{ \eta\sqrt{k+2}} \nonumber\\
    &\leq \frac{1}{2\eta\sqrt{(k+3)(k+2)}}- \frac{1}{2 \eta(k+2)}\label{eq:23}\\
    &= \frac{1}{2\eta \sqrt{k+2}}\frac{\sqrt{k+2}-\sqrt{k+3}}{\sqrt{(k+2)(k+3)}} \nonumber\\
    &\leq -\frac{1}{4\eta (k+2)^{3/2}\sqrt{k+3}},\label{eq:24}
\end{align}
where we used the inequalities for any $x>1,$ $\sqrt{x-1}-\sqrt{x}\leq -(2\sqrt{x})^{-1}$ in \eqref{eq:23} and \eqref{eq:24} and for any $x>0,$ $\sqrt{x+1}-\sqrt{x}\leq (2\sqrt{x})^{-1}$ in \eqref{eq:23}.

\subsection{Proof of \autoref{thm:mda}}

\paragraph{Adapting the proof of non-convex SPA.} We start off by using the inequality satisfied by non-convex SPA (\autoref{thm:sgd_rate}). By applying it to $ h^{(k)}$, we obtain:
\begin{equation}\label{eq:daexpupdsmooth}
\begin{split}
        \frac{1}{2\eta_k}\|\nabla h^{(k)}(x_k)\|_2^2+\frac{1}{2\eta_k}\|\nabla h^{(k)}(z_k)\|_2^2 &\leq \Gamma_k-\mathbb{E}_{k}[\Gamma_{k+1}]+  L_{h^{(k)}}  \mathbb{E}_{k}[\|\nabla h^{(k)}(x_k,\xi_k)\|_2^2]\\
    &+\frac{1}{2}\left[\frac{1}{\eta_k^2}\left(\frac{1}{c_k}-1+\eta_kL_{h^{(k)}}\right)\left(\frac{1}{c_k}-1\right)\right.\\
    &\left.+\frac{1}{\eta_k}L_{h^{(k)}}\left(\frac{1}{c_k}-1\right)^2-\frac{1}{\eta_{k-1}^2c_k^2}\right]L_{h^{(k)}}\|x_k-x_{k-1}\|_2^2.
\end{split}
\end{equation}

By using \eqref{eq:nablah} and the inequality $\|a+b\|_2^2 \leq 2\|a\|_2^2 +2\|b\|_2^2$ for $a,b\in\mathbb{R}^n$ in \eqref{eq:daexpupdsmooth}, we obtain:
 \begin{equation}\label{eq:daexpupdsmoothineq}
\begin{split}
        \frac{1}{4\eta_k}\|\nabla f(x_k)\|_2^2+\frac{1}{4\eta_k}\|\nabla f(z_k)\|_2^2 &\leq \Gamma_k-\mathbb{E}_{k}[\Gamma_{k+1}]+  L_{h^{(k)}}  \mathbb{E}_{k}[\|\nabla f(x_k,\xi_k)\|_2^2]\\
        &+L_{h^{(k)}} \alpha_k\|x_k-x_0\|_2^2+\frac{\alpha_k}{2\eta_k}\left(\|x_k-x_0\|_2^2+\|z_k-x_0\|_2^2\right)\\
    &+\frac{1}{2}\left[\frac{1}{\eta_k^2}\left(\frac{1}{c_k}-1+\eta_kL_{h^{(k)}}\right)\left(\frac{1}{c_k}-1\right)\right.\\
    &\left.+\frac{1}{\eta_k}L_{h^{(k)}}\left(\frac{1}{c_k}-1\right)^2-\frac{1}{\eta_{k-1}^2c_k^2}\right]L_{h^{(k)}}\|x_k-x_{k-1}\|_2^2,
\end{split}
\end{equation}
Now, by using the definition \eqref{eq:Lhk} of $ L_{h^{(k)}}$, the choice of parameters \eqref{eq:remind_params_1}, the boundedness assumption (\autoref{ass:bounddomain}) and the properties on the stochastic oracle (\autoref{ass:sto_grad}), \eqref{eq:daexpupdsmoothineq} becomes: 
 \begin{equation}\label{eq:finalbeforelyap}
\begin{split}
        &\frac{1}{4\eta}\|\nabla f(x_k)\|_2^2+\frac{1}{4\eta}\|\nabla f(z_k)\|_2^2\\
        \leq& \Gamma_k-\mathbb{E}_{k}[\Gamma_{k+1}]+  \left(L +\frac{1}{2\eta(k+1)}\right)\sigma^2+ \frac{R^2}{2\eta(k+1)}\left(\frac{1}{\eta}+L+\frac{1}{2\eta(k+1)} \right)\\
       % &+(L+\alpha_k) \alpha_kR^2+\frac{\alpha_kR^2}{\eta_k} \\
    +&\left(\frac{L}{2}+ \frac{1}{4\eta(k+1)}\right)\left[\frac{1}{\eta^2}\left(\frac{1}{c}-1+\eta\left(L+\frac{1}{2\eta(k+1)}\right)\right)\left(\frac{1}{c}-1\right)\right.\\
    &\left.+\frac{1}{\eta}\left(L+\frac{1}{2\eta(k+1)}\right)\left(\frac{1}{c}-1\right)^2-\frac{1}{\eta^2c^2}\right]\|x_k-x_{k-1}\|_2^2.
\end{split}
\end{equation}

\paragraph{Bound on the $\|x_k-x_{k-1}\|_2^2$ term.} We now expand the condition on the stepsize parameter $\eta$ that yields a negative factor in the $\|x_k-x_{k-1}\|_2^2$ term in \eqref{eq:finalbeforelyap}. 
\begin{align}
    \frac{1}{\eta^2}\left(\frac{1}{c}-1+\eta\left(L+\frac{1}{2\eta(k+1)}\right)\right)\left(\frac{1}{c}-1\right)+\frac{1}{\eta}\left(L+\frac{1}{2\eta(k+1)}\right)\left(\frac{1}{c}-1\right)^2 &\leq \frac{1}{\eta^2c^2}\nonumber\\
    \left( \frac{1}{c}-1\right)^2 +  \eta \left( \frac{1}{c}-1\right)\left(L+\frac{1}{2\eta(k+1)}\right)+\eta L \left(\frac{1}{c}-1\right)^2+\frac{1}{2 (k+1)}\left(\frac{1}{c}-1\right)^2 &\leq \frac{1}{c^2},
\end{align}
which leads to 
\begin{align}
      \eta \frac{L}{c}\left( \frac{1}{c}-1 \right) &\leq \left(\frac{2}{c}-1\right) -\left( \frac{1}{c}-1\right)\frac{1}{2c(k+1)}\nonumber\\
    \eta &\leq \frac{1}{L}\left( c+ \frac{1}{2} \right).\label{eq:cond_lambda}
\end{align}

In what follows, we set the stepsize parameter $\eta$ such that it satisfies \eqref{eq:cond_lambda}. The rest of the proof is dedicated to bouding the difference of Lyapunov functions in \eqref{eq:finalbeforelyap}. 

 \paragraph{Bound on the difference of Lyapunov functions.}  By using \eqref{eq:lyap_def}, the difference of Lyapunov functions is:
\begin{align}
    \Gamma_k-\mathbb{E}_{k}[\Gamma_{k+1}]&=\frac{1}{\eta_{k-1}^2}h^{(k)}(z_{k})-\frac{1}{\eta_{k}^2}\mathbb{E}_{k}[h^{(k)}(z_{k+1})]\label{eq:28}\\
    &+\frac{L_{h^{(k)}}}{\eta_{k-1}}\left(\frac{1}{c_{k-1}}-1\right)h^{(k)}(x_{k-1})-\frac{L_{h^{(k)}}}{\eta_k}\left(\frac{1}{c_k}-1\right)h^{(k)}(x_k)\label{eq:29}\\
    &+\frac{1}{2}\frac{L_{h^{(k)}}}{\eta_{k-1}^2 c_{k}^2}\|x_{k}-x_{k-1}\|_2^2-\frac{1}{2}\frac{L_{h^{(k)}}}{\eta_k^2 c_{k+1}^2}\|x_{k+1}-x_k\|_2^2.\label{eq:30}
\end{align}
We now expand each term in the equality above. We start off by looking at \eqref{eq:28}. By using the definition of $h^{(k)}$, this latter can be rewritten as: 
\begin{equation}
    \begin{split}\label{eq:new_28}
        &\frac{1}{\eta_{k-1}^2}h^{(k)}(z_{k})-\frac{1}{\eta_{k}^2}\mathbb{E}_{k}[h^{(k)}(z_{k+1})]\\
        =&\frac{1}{\eta^2}(f(z_{k})-f(z_{k+1}))+\frac{\alpha_k\|z_k-x_0\|_2^2-\alpha_{k+1}\|z_{k+1}-x_0\|_2^2}{\eta^2}\\
        &+\frac{\alpha_{k+1}-\alpha_{k}}{\eta^2}\|z_{k+1}-x_0\|_2^2\\
        \leq& \frac{1}{\eta^2}(f(z_{k})-f(z_{k+1}))+\frac{\alpha_k\|z_k-x_0\|_2^2-\alpha_{k+1}\|z_{k+1}-x_0\|_2^2}{\eta^2}\\
        =& \frac{1}{\eta^2}((f(z_{k})-f^*)-(f(z_{k+1})-f^*))+\frac{\alpha_k\|z_k-x_0\|_2^2-\alpha_{k+1}\|z_{k+1}-x_0\|_2^2}{\eta^2}
    \end{split}
\end{equation}
where we successively used \eqref{eq:decrease_alpha} in the inequality and introduced $f^*$ as defined in \autoref{sec:notations}. We now turn to \eqref{eq:29} and obtain: 
\begin{equation}
    \begin{split}\label{eq:29_new}
    &\frac{L_{h^{(k)}}}{\eta_{k-1}}\left(\frac{1}{c_{k-1}}-1\right)h^{(k)}(x_{k-1})-\frac{L_{h^{(k)}}}{\eta_k}\left(\frac{1}{c_k}-1\right)h^{(k)}(x_k)\\
    =&\frac{L}{\eta}\left(\frac{1}{c}-1\right)\left[f(x_{k-1})-f(x_{k}) +\frac{\alpha_k}{2}\|x_{k-1}-x_0\|_2^2-\frac{\alpha_{k+1}}{2}\|x_k-x_0\|_2^2  \right] \\
    +&\frac{\alpha_k}{\eta}\left(\frac{1}{c}-1\right)\left[f(x_{k-1})-f(x_{k}) +\frac{\alpha_k}{2}\|x_{k-1}-x_0\|_2^2-\frac{\alpha_{k+1}}{2}\|x_k-x_0\|_2^2  \right]  \\
    +&\frac{L+\alpha_k}{\eta}\left(\frac{1}{c}-1\right)(\alpha_{k+1}-\alpha_{k})\|x_k-x_0\|_2^2.
    \end{split}
\end{equation}
By using \eqref{eq:decrease_alpha} and \eqref{eq:alphakk1diff}, the last term in \eqref{eq:29_new} can be bounded as: 
\begin{align}\label{eq:dec29}
    \frac{L+\alpha_k}{\eta}\left(\frac{1}{c}-1\right)(\alpha_{k+1}-\alpha_{k})\|x_k-x_0\|_2^2&\leq 0.
\end{align}
The second term in \eqref{eq:29_new} can be rewritten as
\begin{equation}
    \begin{split}\label{eq:29_newnew}
        &\frac{\alpha_k}{\eta}\left(\frac{1}{c}-1\right)\left[f(x_{k-1})-f(x_{k}) +\frac{\alpha_k}{2}\|x_{k-1}-x_0\|_2^2-\frac{\alpha_{k+1}}{2}\|x_k-x_0\|_2^2  \right]\\
        =&\frac{1}{\eta}\left(\frac{1}{c}-1\right)\left[\alpha_kf(x_{k-1})-\alpha_{k+1}f(x_{k}) +\frac{\alpha_k^2}{2}\|x_{k-1}-x_0\|_2^2-\frac{\alpha_{k+1}^2}{2}\|x_k-x_0\|_2^2  \right]\\
    +&\frac{1}{\eta}\left(\frac{1}{c}-1\right)(\alpha_{k+1}-\alpha_k) f(x_{k}) +\frac{1}{\eta}\left(\frac{1}{c}-1\right)(\alpha_{k+1}-\alpha_k)  \frac{\alpha_{k+1}}{2}\|x_{k+1}-x_0\|_2^2.
    \end{split}
\end{equation}

By using \eqref{eq:alphakk1diff1} and \eqref{eq:24}, \eqref{eq:29_newnew} becomes
\begin{equation}
    \begin{split}\label{eq:29_newnewnew}
        &\frac{\alpha_k}{\eta}\left(\frac{1}{c}-1\right)\left[f(x_{k-1})-f(x_{k}) +\frac{\alpha_k}{2}\|x_{k-1}-x_0\|_2^2-\frac{\alpha_{k+1}}{2}\|x_k-x_0\|_2^2  \right]\\
        =&\frac{\alpha_k}{\eta}\left(\frac{1}{c}-1\right)\left[(f(x_{k-1})-f^*)-(f(x_{k})-f^*) +\frac{\alpha_k}{2}\|x_{k-1}-x_0\|_2^2\right.\\
        &\left.-\frac{\alpha_{k+1}}{2}\|x_k-x_0\|_2^2  \right]\\
        \leq&\frac{1}{\eta}\left(\frac{1}{c}-1\right)\left[\alpha_k(f(x_{k-1})-f^*)-\alpha_{k+1}(f(x_{k})-f^*) +\frac{\alpha_k^2}{2}\|x_{k-1}-x_0\|_2^2\right.\\
        &\left.-\frac{\alpha_{k+1}^2}{2}\|x_k-x_0\|_2^2  \right]-\frac{1}{\eta}\left(\frac{1}{c}-1\right)\frac{f(x_{k})-f^*}{4\eta (k+2)^{3/2}\sqrt{k+3}}\\
        &\leq \frac{1}{\eta}\left(\frac{1}{c}-1\right)\left[\alpha_k(f(x_{k-1})-f^*)-\alpha_{k+1}(f(x_{k})-f^*) +\frac{\alpha_k^2}{2}\|x_{k-1}-x_0\|_2^2\right.\\
        &\left.-\frac{\alpha_{k+1}^2}{2}\|x_k-x_0\|_2^2  \right],
    \end{split}
\end{equation}
where we used $f(x_k)-f^*>0$ in the last inequality. Finally, we deal with \eqref{eq:30}.
\begin{equation}\label{eq:new_30}
    \begin{split}
        &\frac{1}{2}\frac{L_{h^{(k)}}}{\eta_{k-1}^2 c_{k}^2}\|x_{k}-x_{k-1}\|_2^2-\frac{1}{2}\frac{L_{h^{(k)}}}{\eta_k^2 c_{k+1}^2}\|x_{k+1}-x_k\|_2^2\\
        =&\frac{1}{2}\frac{L}{\eta^2  c^2}(\|x_{k}-x_{k-1}\|_2^2-\|x_{k+1}-x_k\|_2^2)\\
        +&\frac{1}{2\eta^2 c^2}(\alpha_k\|x_{k}-x_{k-1}\|_2^2-\alpha_{k+1}\|x_{k+1}-x_k\|_2^2)\\
    +&\frac{\alpha_{k+1}-\alpha_k}{2\eta^2 c^2}\|x_{k+1}-x_k\|_2^2.
    \end{split}
\end{equation}
By using \eqref{eq:decrease_alpha}, the last term in \eqref{eq:new_30} is upper bounded
\begin{align}\label{eq:dec30}
    \frac{\alpha_{k+1}-\alpha_k}{2\eta^2 c^2}\|x_{k+1}-x_k\|_2^2\leq 0.
\end{align}

Now, by assembling \eqref{eq:new_28}, \eqref{eq:29_new}, \eqref{eq:new_30}, \eqref{eq:29_newnewnew}, \eqref{eq:new_30} and \eqref{eq:dec30}, we have
\begin{equation}
    \begin{split}\label{eq:final_lyap_diff}
        \Gamma_k-\mathbb{E}_{k}[\Gamma_{k+1}]
        &\leq \frac{1}{\eta^2}((f(z_{k})-f^*)-(f(z_{k+1})-f^*))\\
        &+\frac{\alpha_k\|z_k-x_0\|_2^2-\alpha_{k+1}\|z_{k+1}-x_0\|_2^2}{\eta^2}\\
    &+\frac{L}{\eta}\left(\frac{1}{c}-1\right)\left[(f(x_{k-1})-f^*) -(f(x_{k})-f^*)\right.\\
    &\left.+\frac{\alpha_k}{2}\|x_{k-1}-x_0\|_2^2-\frac{\alpha_{k+1}}{2}\|x_k-x_0\|_2^2  \right] \\
&+\frac{1}{\eta}\left(\frac{1}{c}-1\right)\bigg[\alpha_k(f(x_{k-1})-f^*)-\alpha_{k+1}(f(x_{k})-f^*) \\
& +\frac{\alpha_k^2}{2}\|x_{k-1}-x_0\|_2^2-\frac{\alpha_{k+1}^2}{2}\|x_k-x_0\|_2^2  \bigg]\\
  %  &-\frac{1}{\eta}\left(\frac{1}{c}-1\right)\frac{f(x_{k})-f^*}{4\eta (k+2)^{3/2}\sqrt{k+3}}\\
    &+\frac{1}{2}\frac{L}{\eta^2  c^2}(\|x_{k}-x_{k-1}\|_2^2-\|x_{k+1}-x_k\|_2^2)\\
    &+\frac{1}{2\eta^2 c^2}(\alpha_k\|x_{k}-x_{k-1}\|_2^2-\alpha_{k+1}\|x_{k+1}-x_k\|_2^2).
    \end{split}
\end{equation}

 \paragraph{Finalizing the convergence bound.} By summing \eqref{eq:finalbeforelyap} for $k=0,\dots,T$, plugging the bound \eqref{eq:final_lyap_diff} on the difference of Lyapunov functions and using the condition \eqref{eq:cond_lambda} on the stepsize parameter $\eta$, we obtain: 
 
  \begin{equation}\label{eq:finalalmost}
\begin{split}
        &\frac{1}{4T}\sum_{k=0}^T\|\nabla f(x_k)\|_2^2+\frac{1}{4T}\sum_{k=0}^T\|\nabla f(z_k)\|_2^2\\
        &\leq \frac{(f(z_{0})-f^*)-(f(z_{T+1})-f^*)}{\eta T}+\frac{\alpha_0\|z_0-x_0\|_2^2}{\eta T}\\
        &+\left(\frac{1}{c}-1\right)\frac{(L+\alpha_0)(f(x_{0})-f^*)-(L+\alpha_T)(f(x_{T})-f^*) }{T} \\
    &%-\left(\frac{1}{c}-1\right)\frac{1}{4\eta T}\sum_{k=0}^T\frac{f(x_{k})-f^*}{ (k+2)^{3/2}\sqrt{k+3}}
    +\frac{1}{2}\frac{L+\alpha_0}{\eta c^2}(\|x_{0}-x_{-1}\|_2^2)\\
        &+  \left(L\eta +\frac{1}{2T}\sum_{k=0}^T\frac{1}{k+1}\right)\sigma^2+ \left(\left(\frac{1}{\eta}+L \right)\frac{1}{2T} \sum_{k=0}^T\frac{1}{k+1}+\frac{1}{4\eta T }\sum_{k=0}^T\frac{1}{(k+1)^2}\right) R^2.
    %\left(\frac{L\eta}{2}+ \frac{1}{4\eta(k+1)}\right)\left[\frac{1}{\eta^2}\left(\frac{1}{c}-1+\eta\left(L+\frac{1}{2\eta(k+1)}\right)\right)\left(\frac{1}{c}-1\right)\right.\\
  %  &\left.+\frac{1}{\eta}\left(L+\frac{1}{2\eta(k+1)}\right)\left(\frac{1}{c}-1\right)^2-\frac{1}{\eta^2c^2}\right]\|x_k-x_{k-1}\|_2^2
\end{split}
\end{equation}

By using the inequality $\sum_{k=0}^T \frac{1}{k+1}\leq \log(T+1)$ and setting $z_0=x_0$, $x_{-1}=x_0$ and $\eta=1/\sqrt{T}$ for $T \geq L^2/c^2$, we obtain the aimed result.

\section{Experimental setup}
\label{sec:experimental-setup}
% Preview source code from paragraph 161 to 170

\subsection*{CIFAR10}
Our data augmentation pipeline consisted of random horizontal flipping, then random crop to 32x32, then normalization by centering around (0.5, 0.5, 0.5).
The learning rate schedule normally used for SGD, consisting of a 10-fold decrease at epochs 150 and 225 was found to work well for MDA and Adam. Flat schedules as well as inverse-sqrt schedules did not work as well.

\begin{tabular}{|c|c|}
\hline 
Hyper-parameter & Value\tabularnewline
\hline 
\hline 
Architecture & PreAct ResNet152\tabularnewline
\hline 
Epochs & 300\tabularnewline
\hline 
GPUs & 1xV100\tabularnewline
\hline 
Batch Size per GPU & 128\tabularnewline
\hline 
Decay & 0.0001\tabularnewline
\hline 
Seeds & 10\tabularnewline
\hline 
\end{tabular}

\subsection*{ImageNet}
Data augmentation consisted of the RandomResizedCrop(224) operation in PyTorch, followed by RandomHorizontalFlip then normalization to mean=[0.485, 0.456, 0.406] and std=[0.229, 0.224, 0.225]. The standard schedule for SGD, where the learning rate is decreased 10 fold every 30 epochs, was found to work well for MDA also. No alternate schedule worked well for Adam.

\begin{tabular}{|c|c|}
\hline 
Hyper-parameter & Value\tabularnewline
\hline 
\hline 
Architecture & ResNet50\tabularnewline
\hline 
Epochs & 100\tabularnewline
\hline 
GPUs & 8xV100\tabularnewline
\hline 
Batch size per GPU & 32\tabularnewline
\hline 
Decay & 0.0001\tabularnewline
\hline 
Seeds & 5\tabularnewline
\hline 
\end{tabular}

\subsection*{fastMRI}
For this task, the best learning rate schedule is a flat schedule, with a small number fine-tuning epochs at the end to stabilize. To this end, we decreased the learning rate 10 fold at epoch 40.

\begin{tabular}{|c|c|}
\hline 
Hyper-parameter & Value\tabularnewline
\hline 
\hline 
Architecture & 12 layer VarNet 2\tabularnewline
\hline 
Epochs & 50\tabularnewline
\hline 
GPUs & 8xV100\tabularnewline
\hline 
Batch size per GPU & 1\tabularnewline
\hline 
Decay & 0.0\tabularnewline
\hline 
Acceleration factor & 4\tabularnewline
\hline 
Low frequency lines & 16\tabularnewline
\hline 
Mask type & Offset-1\tabularnewline
\hline 
Seeds & 5\tabularnewline
\hline 
\end{tabular}

\subsection*{IWSLT14}
Our implementation used FairSeq defaults except for the parameters listed below. For the learning rate schedule, ADAM used the inverse-sqrt, whereas we found that either fixed learning rate schedules or polynomial decay schedules worked best, with a decay coefficient of 1.0003 starting at step 10,000.

\begin{tabular}{|c|c|}
\hline 
Hyper-parameter & Value\tabularnewline
\hline 
\hline 
Architecture & transformer\_iwslt\_de\_en\tabularnewline
\hline 
Epochs & 55\tabularnewline
\hline 
GPUs & 1xV100\tabularnewline
\hline 
Max tokens per batch & 4096\tabularnewline
\hline 
Warmup steps & 4000\tabularnewline
\hline 
Decay & 0.0001\tabularnewline
\hline 
Dropout & 0.3\tabularnewline
\hline 
Label smoothing & 0.1\tabularnewline
\hline 
Share decoder/input/output embed & True\tabularnewline
\hline 
Float16 & True\tabularnewline
\hline 
Update Frequency & 8\tabularnewline
\hline 
Seeds & 20\tabularnewline
\hline 
\end{tabular}

\subsection*{RoBERTa}
Our hyper-parameters follow the released \href{https://github.com/pytorch/fairseq/blob/master/examples/roberta/README.pretraining.md}{documentation} closely. We uesd the same hyper-parameter schedule for ADAM and SGD, with different learning rates chosen by a grid search.

\begin{tabular}{|c|c|}
\hline 
Hyper-parameter & Value\tabularnewline
\hline 
\hline 
Architecture & roberta\_base\tabularnewline
\hline 
Task & masked\_lm\tabularnewline
\hline 
Max updates & 20,000\tabularnewline
\hline 
GPUs & 8xV100\tabularnewline
\hline 
Max tokens per batch & 4096\tabularnewline
\hline 
Decay & 0.01 (ADAM) / 0.0 (MDA) \tabularnewline
\hline 
Dropout & 0.1\tabularnewline
\hline 
Attention dropout & 0.1\tabularnewline
\hline 
Tokens per sample & 512\tabularnewline
\hline 
Warmup & 10,000\tabularnewline
\hline 
Sample break mode & complete\tabularnewline
\hline 
Skip invalid size inputs valid test & True\tabularnewline
\hline 
LR scheduler & polynomial\_decay\tabularnewline
\hline 
Max sentences & 16\tabularnewline
\hline 
Update frequency & 16\tabularnewline
\hline 
tokens per sample & 512\tabularnewline
\hline 
Seeds & 1\tabularnewline
\hline 
\end{tabular}

\subsection*{Wikitext}
Our implementation used FairSeq defaults except for the parameters listed below. The learning rate schedule that gave the best results for MDA consisted of a  polynomial decay starting at step 18240 with a factor 1.00001.

\begin{tabular}{|c|c|}
\hline 
Hyper-parameter & Value\tabularnewline
\hline 
\hline 
Architecture & transformer\_lm\tabularnewline
\hline 
Task & language\_modeling\tabularnewline
\hline 
Epochs & 46\tabularnewline
\hline 
Max updates & 50,000\tabularnewline
\hline 
GPUs & 1xV100\tabularnewline
\hline 
Max tokens per batch & 4096\tabularnewline
\hline 
Decay & 0.01\tabularnewline
\hline 
Dropout & 0.1\tabularnewline
\hline 
Tokens per sample & 512\tabularnewline
\hline 
Warmup & 4000\tabularnewline
\hline 
Sample break mode & None\tabularnewline
\hline 
Share decoder/input/output embed & True\tabularnewline
\hline 
Float16 & True\tabularnewline
\hline 
Update Frequency & 16\tabularnewline
\hline 
Seeds & 20\tabularnewline
\hline 
\end{tabular}

\section{Further experiments}\label{sec:further_exps}

\begin{figure}[t]
       \begin{minipage}{.5\textwidth}
	\centering
        \includegraphics[width=.85\linewidth]{./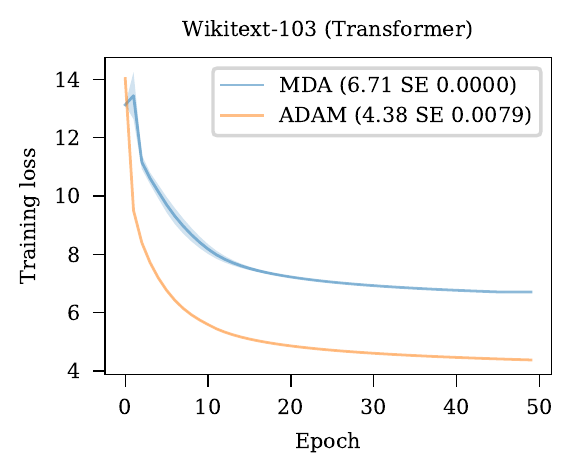}
     \end{minipage}%
    \begin{minipage}{0.45\textwidth}
        \resizebox{1\textwidth}{!}{
         \begin{tabular}{|l|c|c|c|}
  \hline
  & SGD & Adam & MDA \\
  \hline
  Perplexity & $78.5$ &  $33.05\pm0.16$ & $31.54\pm0.11$ \\
  \hline
\end{tabular}}
     \end{minipage}
     \caption{Training loss performance on the wikitext-103 language modeling task (left) and the test performance (right). Some over-fitting is observed for ADAM.}
\end{figure}

\subsection{Language modeling}
We use Wikitext-103 \citep{merity2016pointer} dataset which contains 100M tokens. 
Following the setup in \cite{ott2019fairseq}, we train a six-layer tensorized transformer and report the perplexity (PPL)
on the test set. \autoref{fig:nlp} (c) reports the training loss and (d) the perplexity score on the test set of SGD (as reported in \citep{yao2020adahessian}), MDA and Adam on Wikitext-103.
We note that the MDA reaches a significantly worse training loss than Adam (2.33) in this case. Yet, it achieves a better perplexity (1.51) on the test set. As with CIFAR-10, we speculate this is due to the additional decaying regularization that is a key part of the MDA algorithm.

\end{document}